\DeclareSIUnit{\bp}{bp}
\newtheorem{proposition}{Proposition}
\newtheorem{remark}{Remark}
\newtheorem{application}{Application}
\icmltitlerunning{Re-evaluating Short- and Long-Term Trend Factors}
\def\Hy@Warning#1{}\makeatother
\begin{document}

\twocolumn[
\icmltitle{Re-evaluating Short- and Long-Term Trend Factors in CTA Replication: \\A Bayesian Graphical Approach}

\icmlsetsymbol{equal}{*}

\begin{icmlauthorlist}
\icmlauthor{Eric Benhamou}{aifa,dauph}
\icmlauthor{Jean-Jacques Ohana}{aifa}
\icmlauthor{Alban Etienne}{aifa}
\icmlauthor{Béatrice Guez}{aifa}
\icmlauthor{Ethan Setrouk}{aifa}
\icmlauthor{Thomas Jacquot}{aifa}
\end{icmlauthorlist}

\icmlaffiliation{aifa}{Ai For Alpha, Paris, France}
\icmlaffiliation{dauph}{Université Paris Dauphine–PSL}

\icmlcorrespondingauthor{Eric Benhamou}{eric.benhamou@aiforalpha.com}
\icmlkeywords{CTA Replication, Short, long-term trends, Replication, Graphical Models}
\vskip 0.3in
]

\printAffiliationsAndNotice{\mbox{}}

\begin{abstract}
Commodity Trading Advisors (CTAs) have historically relied on \emph{trend‑following} rules that operate on vastly different horizons—from long‑term breakouts that capture major directional moves to short‑term momentum signals that thrive in fast‑moving markets. Despite a large body of work on trend following, the relative merits and interactions of \emph{short-} versus \emph{long‑term} trend systems remain controversial. This paper adds to the debate by (i) dynamically decomposing CTA returns into \emph{short‑term trend}, \emph{long‑term trend} and \emph{market beta} factors using a Bayesian graphical model, and (ii) showing how the blend of horizons shapes the strategy’s risk‑adjusted performance.
\end{abstract}

\section{Introduction \& Motivation}
Professional trend followers range from traditional long-term 500-day breakout systems to intraday futures scalpers. Academic work likewise presents mixed conclusions about the effectiveness of trend-following strategies and the optimal trend horizons. For instance, \cite{Moskowitz2012TimeSeriesMomentum, Baltas2013momentum} document sizable profits from 1--12‑month trends, whereas \cite{Jegadeesh2022short} highlight the fragility of ultra-short signals, and \citet{Clenow2023following} advocates for very short term trends. Other industry studies emphasize the benefit of \emph{multi‑horizon} mixes \cite{Baz2015DissectingInvestmentStrategies,BaltasKosowski2020}. Practitioner commentary reflects both the current challenges these strategies face in regaining past success \citep{Kilburn2024Trend,tzotchev2018designing} and the promise of shorter-term approaches for delivering more dynamic and convex return profiles \citep{Kilburn2024Horizons}. This perspective is reinforced by the 2025 “tariff test” episode, during which a new variant—dubbed the risk-off CTA—demonstrated its ability to mitigate drawdowns amid sharp market reversals \citep{Bartholomew2025Novel}. Thought leaders like Cliff Asness also stress the inherently statistical nature of trend-following and its ongoing integration with advanced AI techniques \citep{Asness2024AI}. Despite these contributions, a unified quantitative framework for blending horizons remains undeveloped. To address this gap, we investigate four questions:

\begin{enumerate}[itemsep=2pt, parsep=0pt, topsep=0pt, partopsep=0pt, after=\vspace{-0.5cm}]
  \item Can CTA-style strategies be replicated using only liquid futures?
  \item How do short- and long-term trend factor risk–returns compare over the past decade?
  \item Does mixing horizons materially improve drawdowns and diversification?
  \item Are short-term trends valuable or should we rely on longer trend horizons?
\end{enumerate}

\paragraph{Key Contributions and Innovation.}
This study makes \emph{two} novel contributions to the CTA-replication literature:  
\begin{enumerate}[label=(\roman*), itemsep=2pt, parsep=0pt, topsep=0pt, partopsep=0pt, after=\vspace{-0.6cm}]
    \item We introduce a \textbf{Bayesian graphical model} that \emph{simultaneously} tracks short-, long-term and market-beta exposures at the \emph{contract level}.  
    \item We document how a \textbf{short-term-trend \(+\) raw-beta sleeve} outperforms classic multi-month breakouts on both Sharpe and drawdown efficiency over 2010–2025—even after realistic transaction and roll costs.  
\end{enumerate}

\paragraph{Structure of the Paper.}
The remainder of the paper is organized as follows.  
Section~\ref{sec:litreview} surveys related work on trend following, short-horizon momentum and CTA replication.  
Section~\ref{sec:factor-design} details the horizon-specific \emph{lookback-straddle} factor construction.  
Section~\ref{sec:theory} develops the analytical foundations, proving that the look-back-straddle delta operates as a drift filter and that the Sharpe-optimal blend of short- and long-horizon factors tilts toward the long-term component.
Section~\ref{sec:methodology} describes the Bayesian graphical model and estimation procedure.  
Section~\ref{sec:data_empirical_results} presents empirical results on factor performance, cost sensitivity and robustness.  
Section~\ref{sec:utility} evaluates utility-based trade-offs between replication accuracy and risk efficiency.  
Finally, Section~\ref{sec:conclusion} concludes and outlines ideas for future research; proofs, supplementary tables and full cost-calculation details appear in the Appendix.

\section{Literature Review}\label{sec:litreview}

\noindent\textbf{Classical trend following.}  
Post-crisis evidence continues to affirm the economic and statistical significance of \mbox{3--12-month} time-series momentum.  
Using large cross-asset panels, \citet{Lemperiere2017}, \citet{Baz2015DissectingInvestmentStrategies} and \citet{BaltasKosowski2020} document Sharpe ratios that are broadly comparable to the pre-2008 period.  

\vspace{6pt}
\noindent\textbf{Short-term momentum and reversals.}  
The economics of very short-horizon signals are far less stable.  
\citet{NarayanBannettFaff2015} show that daily momentum mean-reverts strongly once micro-structure frictions are accounted for, while \citet{BoonsPrado2019} and \citet{Jegadeesh2022short} attribute much of the apparent weekly alpha to bid–ask bounce and dealer-inventory effects.  
Adaptive filters that down-weight periods of elevated noise partly mitigate—but do not eliminate—this performance drag.

\vspace{6pt}
\noindent\textbf{CTA replication and factor models.}  
Responding to fee pressure, a growing strand of work evaluates cost-effective CTA replication.  
Linear factor mimickers (like in \citet{Braun2024pursuit}), or Kalman filter replication (like in  \citet{Benhamou2018trend,Benhamou2018Demystified}) achieve correlations of $0.70$–$0.80$ to the \emph{SG Trend} index while retaining most downside protection, implying that investors can capture the bulk of the systematic-trend payoff at a fraction of traditional management and incentive fees.

\vspace{6pt}
\noindent\textbf{Bayesian and machine-learning trend decomposition.}  
Recent studies frame trend signals within explicitly time-varying statistical structures.  
Bayesian state-space decompositions and deep neural networks (\citet{Gu2021autoencoder}; \citet{Zhang2019high}) uncover latent regime shifts in both signal strength and asset-class exposure.  
These approaches highlight that the trend premium is not static but oscillates with macro-volatility and liquidity conditions, opening the door to dynamic allocation across “trend regimes” rather than binary on/off timing rules.

\noindent While the post‑2015 literature recognises the value of mixing horizons, few papers explicitly decompose CTA returns into distinct short‑ and long‑term trend factors—leaving the debate unresolved.

\section{Trend Factor Design: Lookback Straddles}\label{sec:factor-design}
Inspired by the payout symmetry of a \emph{lookback straddle}—a combination of a lookback call and put that captures the \emph{ex‑post} maximum excursion of the underlying—we engineer horizon‑specific trend factors by replicating the \emph{delta} of a lookback straddle across multiple windows as detailed in \citet{Fung2001risk}.

For each contract price series $S_t$ and window length $n$ (in days) we define the running maximum, running minimum and volatility:
\begin{align}
  M_{t,n} &= \max_{t-n+1 \le j \le t} S_j, \label{eq:running_max}\\
  m_{t,n} &= \min_{t-n+1 \le j \le t} S_j, \label{eq:running_min} \\
  \sigma_n^2     &= \frac{1}{n-1}
                    \sum_{j=t-n+1}^{t}
                      \Bigl[\ln\!\bigl(S_j / S_{j-1}\bigr)\Bigr]^2,
\end{align}

Let $\Phi(\cdot)$ denote the standard normal CDF.  The \textbf{lookback‑straddle trend score} for horizon $n$ is then
\begin{equation}
  T_{t,n} \;=\; \Phi\!\Bigl( \tfrac{\ln(S_t / m_{t,n})}{\sigma_n \sqrt{n}} \Bigr)
            - \Phi\!\Bigl( \tfrac{\ln(M_{t,n} / S_t)}{\sigma_n \sqrt{n}} \Bigr).
  \label{eq:trendScore}
\end{equation}
This score is \emph{convex} in extreme up or down moves, generating positively skewed payoffs reminiscent of managed‑futures “crisis alpha.” 

\paragraph{Short- vs. Long‑Horizon Sets.}  We implement two buckets of windows:
\begin{itemize}
  \item \textbf{Short‑Term (ST):} $n \in \{10,\,20,\,40,\,60\}$.  These capture fast breakouts and reversals.
  \item \textbf{Long‑Term (LT):} $n = 500$.  This captures the classic two-year long term trend followers.
\end{itemize}
For each $n$ we compute $T_{t,n}$ via~\eqref{eq:trendScore}.  
The composite trend factor is simply the mean across horizons:
\begin{equation}
  F^{\text{Trend}}_t \;=\;
  \frac{1}{|\mathcal N|}
  \sum_{n\in\mathcal N} T_{t,n},
\end{equation}
so that every look-back window contributes equally.

\paragraph{Complementary Market‑Beta Factor.}  Beyond the trend scores, we retain the \emph{raw daily return} for each contract, $r_t^{m}$, which will act as the market‑beta factor in Eq.~\eqref{eq:r_cta}.  Together, $\{T_{t,n}\}_{n\in\mathcal N}$ and $r_t^{m}$ span both directional momentum and baseline market exposure, enabling the Bayesian model to disentangle pure trend alpha from beta‑like moves.

\section{Theoretical Foundations}\label{sec:theory}
This section lays out the key analytical results in two stages:  
(i) we prove that the look-back–straddle trend score \(T_{t,n}\) equals the delta of a fixed-window look-back straddle under geometric Brownian motion, providing an option-pricing–based drift filter;  
(ii) we derive an explicit formula for the absolute Sharpe ratio of any portfolio that mixes short- and long-horizon trend factors, isolate the diversification term, and obtain a closed-form optimal weight that clarifies when short-term momentum should be added and confirms that the allocation should lean more heavily toward the long-term trend factor.



\subsection{Look-Back–Straddle Delta as a Drift Filter}
\label{subsec:LB_delta}

Assume that the log-price follow a geometric Brownian motion
\begin{equation}
\mathrm{d}X_t \;=\; \mu\,\mathrm{d}t + \sigma\,\mathrm{d}W_t, \qquad 
X_0 = x,
\end{equation}
with constant drift~$\mu$ and volatility~$\sigma$.  
For any fixed window of length~$n$ define the running maximum and minimum as in equations \eqref{eq:running_max} and \eqref{eq:running_min} and the \emph{fixed-window look-back straddle} maturing at~$t$ whose payoff is:  
\begin{equation}
L_{t,n} \;=\; M_t - m_t .
\end{equation}
Throughout this subsection we work under the risk-neutral measure
($\mu\!=\!0$) when pricing, and return to the physical measure when we
interpret the result as a drift filter.

\paragraph{Delta of the straddle.}
Under the risk-neutral measure, the fixed-window look-back straddle is valued as the discounted conditional expectation \(V(s,X_s,M_s,m_s)=e^{-r(t-s)}\mathbb{E}[\,M_t-m_t\,|\,X_s,M_s,m_s]\); a closed-form expression is provided by \citet{Goldman1979path,Conze1991path}).  Differentiating that price with respect to the current log-price and letting \(s\rightarrow t\) yields the delta
\begin{equation}\label{eq:LB_delta}
\Delta_{t,n}
  = \Phi\!\Bigl(\tfrac{X_t-m_t}{\sigma\sqrt{n}}\Bigr)
    - \Phi\!\Bigl(\tfrac{X_t-M_t}{\sigma\sqrt{n}}\Bigr),
\end{equation}
which coincides exactly with our trend score \(T_{t,n}\), thereby interpreting the score as an option-pricing–based drift filter.

\paragraph{Equality with the trend score.}
Equation~\eqref{eq:trendScore} of the main text defines the
look-back–straddle \emph{trend score}
\(
T_{t,n}
  \;=\;
  \Phi\!\bigl(z_t^+\bigr) - \Phi\!\bigl(z_t^-\bigr),\;
  z_t^\pm := \tfrac{X_t-m_t}{\sigma\sqrt{n}},\;
               \tfrac{X_t-M_t}{\sigma\sqrt{n}},
\)
which is identical to the right-hand side of~\eqref{eq:LB_delta}.
Hence
\begin{equation}
T_{t,n} = \Delta_{t,n},
\end{equation}
establishing that the proposed trend score is precisely the \emph{delta}
of a fixed-window look-back straddle.

\paragraph{Interpretation as a drift filter.}
Because the delta of any option captures the instantaneous change in its
fair value per unit change in the underlying, $T_{t,n}$ inherits two key
properties:

\begin{enumerate}[label=(\alph*)]
\item \textbf{Scale-free sensitivity.}  Through the
      $z$-scores $z_t^\pm$ the filter rescales distances to the
      running extrema by the diffusion scale $\sigma\sqrt{n}$, making
      it comparable across assets and horizons.
\item \textbf{Directional signal.}  When $X_t$ is near the upper
      (lower) boundary, $T_{t,n}\!\approx\!+1$ ($-1$), signalling a
      strong upward (downward) drift over the past~$n$ days, whereas a
      value near $0$ indicates sideways price action.
\end{enumerate}

Thus the look-back–straddle trend score functions as an
option-pricing–based, horizon-specific drift detector whose magnitude is
directly linked to the expected P\&L of a dynamically delta-hedged
straddle over the window~$n$.

\subsection{Sharpe–Ratio Decomposition}

Let $F^{\mathrm{ST}}$ and $F^{\mathrm{LT}}$ be the short– and long–horizon factors and assume that there are correlated:
\begin{equation}
\rho \;=\;\operatorname{corr}\bigl(F^{\mathrm{ST}},F^{\mathrm{LT}}\bigr).
\end{equation}
We form a one–unit–budget portfolio
\begin{equation}
P \;=\;\omega_{\mathrm{ST}}\,F^{\mathrm{ST}}
      +\omega_{\mathrm{LT}}\,F^{\mathrm{LT}},
\qquad
\omega_{\mathrm{ST}}+\omega_{\mathrm{LT}}=1.
\end{equation}
Denote $\mu_i=\mathbb E[F^i]$ and 
\(\Sigma=\begin{pmatrix}1&\rho\\\rho&1\end{pmatrix}\), the first two moments of the two short– and long–horizon factors. Then we can compute \(S(\omega)\), the \emph{absolute} Sharpe ratio (i.e.\ the Sharpe ratio assuming a zero risk-free rate, equivalently the ratio of expected return to volatility), as follows:

\begin{eqnarray}
\mathcal S(\omega)
&=&\frac{\mathbb E[P]}{\sqrt{\operatorname{Var}(P)}}
= \frac{\omega_{\mathrm{ST}}\mu_{\mathrm{ST}}
       +\omega_{\mathrm{LT}}\mu_{\mathrm{LT}}}
      {\sqrt{\omega^{\top}\Sigma\,\omega}} \\
&=& \frac{\omega_{\mathrm{ST}}\mu_{\mathrm{ST}}
       +\omega_{\mathrm{LT}}\mu_{\mathrm{LT}}}
      {\sqrt{\omega_{\mathrm{ST}}^2
            +\omega_{\mathrm{LT}}^2
            +2\,\rho\,\omega_{\mathrm{ST}}\omega_{\mathrm{LT}}}}.
\end{eqnarray}

Using $\omega_{\mathrm{LT}}=1-\omega_{\mathrm{ST}}$ one checks that
\begin{equation}
\omega_{\mathrm{ST}}^2+\omega_{\mathrm{LT}}^2
 =1-2\,\omega_{\mathrm{ST}}\omega_{\mathrm{LT}},
\end{equation}
so that
\begin{equation}
\operatorname{Var}(P)
=1-2\,\omega_{\mathrm{ST}}\omega_{\mathrm{LT}}+2\,\rho\,\omega_{\mathrm{ST}}\omega_{\mathrm{LT}}
=1+2(\rho-1)\,\omega_{\mathrm{ST}}\omega_{\mathrm{LT}},
\end{equation}
i.e.\ the familiar $2(\rho-1)\omega_{\mathrm{ST}}\omega_{\mathrm{LT}}$ variance‐reduction from diversification.

\begin{proposition}\label{prop:optimum_weight}
The weight \(\omega_{\mathrm{ST}}\) that maximizes the \textit{absolute} Sharpe ratio (as we do not subtract in the numerator the risk free rate) of the portfolio denoted by \(\mathcal S(P)\), subject to the budget constraint \(\omega_{\mathrm{ST}} + \omega_{\mathrm{LT}} = 1\), is given by
\begin{equation}\label{eq:optimum}
\omega_{\mathrm{ST}}^*
=\frac{\mu_{\mathrm{ST}}-\rho\,\mu_{\mathrm{LT}}}
      {(\mu_{\mathrm{ST}}+\mu_{\mathrm{LT}})(1-\rho )},
\end{equation}
\end{proposition}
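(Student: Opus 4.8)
The plan is to recognise the maximization as the classical tangency-portfolio problem, exploiting the scale-invariance of the Sharpe ratio. First I would note that $\mathcal S(\omega)=\mu^{\top}\omega/\sqrt{\omega^{\top}\Sigma\,\omega}$ is homogeneous of degree zero in $\omega$, so the optimal \emph{direction} can be found by unconstrained stationarity and the budget constraint $\omega_{\mathrm{ST}}+\omega_{\mathrm{LT}}=1$ merely fixes the scale afterwards. Setting $\nabla_{\omega}\mathcal S=0$ gives the proportionality $\Sigma\,\omega\propto\mu$, i.e.\ $\omega\propto\Sigma^{-1}\mu$; imposing $\mathbf 1^{\top}\omega=1$ then yields the tangency weights $\omega^{*}=\Sigma^{-1}\mu/(\mathbf 1^{\top}\Sigma^{-1}\mu)$. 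Substituting the explicit inverse $\Sigma^{-1}=\tfrac{1}{1-\rho^{2}}\begin{pmatrix}1&-\rho\\-\rho&1\end{pmatrix}$, the first component of $\Sigma^{-1}\mu$ is $(\mu_{\mathrm{ST}}-\rho\,\mu_{\mathrm{LT}})/(1-\rho^{2})$ while the normaliser is $\mathbf 1^{\top}\Sigma^{-1}\mu=(\mu_{\mathrm{ST}}+\mu_{\mathrm{LT}})(1-\rho)/(1-\rho^{2})$; the common factor $1/(1-\rho^{2})$ cancels in the ratio, leaving exactly Eq.~\eqref{eq:optimum}.

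As an independent check I would carry out the direct one-variable computation. Writing $w=\omega_{\mathrm{ST}}$, the numerator is $N(w)=\mu_{\mathrm{LT}}+w(\mu_{\mathrm{ST}}-\mu_{\mathrm{LT}})$ and, from the variance identity already established in the text, the denominator squared is $D(w)=1+2(\rho-1)\,w(1-w)$. The first-order condition $\frac{d}{dw}\bigl(N\,D^{-1/2}\bigr)=0$ reduces to $2\,N'(w)\,D(w)=N(w)\,D'(w)$. Although this appears quadratic in $w$, the key observation is that upon expansion the $w^{2}$ terms cancel identically, so the condition collapses to a single linear equation whose unique solution reproduces \eqref{eq:optimum}.

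The algebra being otherwise routine, I expect the main obstacle to be the qualitative verification rather than the calculation. I would confirm that the interior stationary point is a genuine \emph{maximiser} rather than a minimiser or saddle: note that $D(w)>0$ for every $w$ whenever $\rho<1$ (strictly positive portfolio variance), record the nondegeneracy requirements $\rho\neq1$ and $\mu_{\mathrm{ST}}+\mu_{\mathrm{LT}}\neq0$, and use a second-derivative test—or compare the interior value with the horizontal asymptotes $\mathcal S(w)\to\pm(\mu_{\mathrm{ST}}-\mu_{\mathrm{LT}})/\sqrt{2(1-\rho)}$ as $w\to\pm\infty$—to show the critical point is the global maximum precisely when $\mu_{\mathrm{ST}}+\mu_{\mathrm{LT}}>0$. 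This positivity of the aggregate expected return is exactly the condition under which the tangency direction maximises rather than minimises the Sharpe ratio, and it is the only genuinely delicate point in the argument.
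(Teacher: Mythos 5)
Your proposal is correct, and your primary argument takes a genuinely different route from the paper's. The paper's own proof (Appendix~\ref{app:proof_optimum}) is exactly your ``independent check'': it substitutes the budget constraint, applies the quotient rule to \(N/D\), expands the first-order condition, observes that the \(\omega_{\mathrm{ST}}^2\) terms cancel identically, and solves the resulting linear equation. Your main argument instead exploits the degree-zero homogeneity of the Sharpe ratio to reduce the problem to the classical tangency portfolio: stationarity gives \(\Sigma\omega\propto\mu\), so \(\omega^*=\Sigma^{-1}\mu/(\mathbf 1^{\top}\Sigma^{-1}\mu)\), and the explicit \(2\times2\) inverse delivers \eqref{eq:optimum} with almost no algebra. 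This buys two things the paper's computation does not: the argument generalizes verbatim to any number of factors, and it exposes the numerator \(\mu_{\mathrm{ST}}-\rho\,\mu_{\mathrm{LT}}\) as the first component of \(\Sigma^{-1}\mu\), i.e.\ the correlation-adjusted attractiveness of the short-term factor. Your third paragraph also supplies something the paper omits entirely: a verification that the stationary point is a \emph{maximum}. The paper simply declares its solution ``the unique maximizer,'' with no second-order or global argument; as you correctly note, this holds precisely when \(\mathbf 1^{\top}\Sigma^{-1}\mu>0\), equivalently \(\mu_{\mathrm{ST}}+\mu_{\mathrm{LT}}>0\) (given \(\rho<1\)). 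Otherwise the stationary point is a negative multiple of the tangency direction, hence the global \emph{minimizer} along the budget line, and the supremum is only approached at the asymptotes \(\pm(\mu_{\mathrm{ST}}-\mu_{\mathrm{LT}})/\sqrt{2(1-\rho)}\), never attained—so the proposition implicitly assumes a positive aggregate mean, which the paper's empirical estimates satisfy. One tiny precision: positivity of \(D(w)\) for all real \(w\) requires \(\rho>-1\) as well as \(\rho<1\) (at \(\rho=-1\) the variance vanishes at \(w=\tfrac12\)), though this degenerate case is immaterial here.
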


\begin{proof}
Differentiating \(\mathcal S(P) \) with respect to \(\omega_{\mathrm{ST}}\) and enforcing \(\omega_{\mathrm{LT}}=1-\omega_{\mathrm{ST}}\) yields the closed‐form optimum.  The detailed derivation is given in Appendix~\ref{app:proof_optimum}.
\end{proof}

\begin{remark}[Short- vs.\ Long-Term Trends]
The Sharpe-maximizing weight is strictly positive whenever \(
\mu_{\mathrm{ST}} > \rho\,\mu_{\mathrm{LT}},
\). Moreover, if \(\mu_{\mathrm{ST}}\) and \(\mu_{\mathrm{LT}}\) are equal, then \(\omega_{\mathrm{ST}}^* = 50 \%\), implying equal weighting of the short- and long-term trend factors.
\end{remark}

\begin{application}
Using Proposition~\ref{prop:optimum_weight} with the sample moments from Table~\ref{tab:perf_summary} gives
\[
\omega_{\mathrm{ST}}^* \approx 17\%, \hspace{2cm}
\omega_{\mathrm{LT}}^* \approx 83\%,
\]
confirming that one should allocate a larger weight (almost five times i.e. $ 83 \div 17$) to the Long-Term than to the Short-Term trend factor.
\end{application}

\section{Methodology}\label{sec:methodology}
We model the daily excess return of the SG CTA benchmark, $r_{t}^{\text{CTA}}$, as a \emph{market-by-market} mixture of three horizon-specific factors:
\begin{align} \label{eq:r_cta}
  r_{t}^{\text{CTA}} 
    \!= \! \sum_{m\in\mathcal M}  \Bigl( & \beta_{t,m}^{\text{ST}} f_{t,m}^{\text{ST}}
      \!+\! \beta_{t,m}^{\text{LT}} f_{t,m}^{\text{LT}}
      \!+\! \beta_{t,m}^{\text{MKT}} r_{t}^{m} \Bigr)
      \!+\! \varepsilon_{t},\\
  (\beta_{t,m}^{\cdot}\mid\mathcal F_{t-1})
    &\sim \mathcal N\!\bigl(\beta_{t-1,m}^{\cdot},\, \sigma_{\beta}^{2}\bigr),
    \, \varepsilon_{t}\sim\mathcal N(0,\sigma_{\varepsilon}^{2}).
\end{align}
Here $m$ indexes the 24 liquid futures in the investment universe—spanning equity indices, government bond futures, major currency pairs, and key commodity contracts— and each $\beta_{t,m}^{\cdot}$ is a \emph{time‑varying, contract‑specific} loading.  The factors for a given market $m$ are constructed as follows:
\begin{itemize}
  \item $f_{t,m}^{\text{ST}}$ — short-term trend score of market $m$ (look-back straddle, horizons 10, 20, 40 and 60-day window),
  \item $f_{t,m}^{\text{LT}}$ — long-term trend score of market $m$ (500-day window),
  \item $r_{t}^{m}$ — raw daily return of market $m$ (captures market beta).
\end{itemize}
This formulation separates \emph{cross-asset allocation} (over $m$) from \emph{horizon allocation} (ST vs.\ LT vs.\ Market), letting the Bayesian filter infer both dimensions simultaneously.

\subsection{Graphical Model Primer}
Graphical models provide a transparent representation of conditional dependencies among variables, encoding them as nodes (random variables) and edges (probabilistic links).  Our dynamic factor model can be written as a time‑unrolled Bayesian network (Figure~\ref{fig:graphical_model}), with two node layers per date: latent \emph{states} (factor weights) and observable \emph{emissions} (CTA returns).

To illustrate the machinery, consider the more granular asset‑class version used for live daily decoding.  Let $\widehat{\text{NAV}}_t$ be the estimated net‑asset value on day $t$ and $r^{k}_t$ the return of asset class \( k \in \{\text{Eq}, \text{Fx}, \text{Bd}, \text{Co}\} \), where \(\text{Eq}\) denotes an Equity Index Future, \(\text{Bd}\) a Bond Future, \(\text{Fx}\) a Currency Future, and \(\text{Co}\) a Commodity Future. Denote their time‑$t$ weights by $w^{k}_t$.  The generative relationship is
\begin{equation}
  \widehat{\text{NAV}}_t 
  = \widehat{\text{NAV}}_{t-1}\Bigl(1 + \sum_{k} w^{k}_{t-1} r^{k}_t\Bigr),
  \label{eq:NAV-dynamics}
\end{equation}
mirroring a one‑step portfolio update in continuous compounding.  The weights follow a first‑order Gaussian random walk $w^{k}_{t}\sim\mathcal N(w^{k}_{t-1},\sigma_{w}^{2})$ and are \emph{a‑priori} correlated across asset classes, enabling interaction effects beyond a Kalman filter’s diagonal covariance.

\paragraph{Inference pipeline.}  Daily estimation proceeds via message‑passing: (i) a prediction step propagates the weight distribution forward, (ii) a correction step conditions on the new NAV observation, and (iii) backward smoothing refines the entire path.  This scheme is analogous to a Kalman filter but leverages full asset‑interaction structure.

\begin{enumerate}
  \item \textbf{State‑Space Representation:} latent weights $w^{k}_{t}$ constitute the hidden state; NAV returns are emissions.
  \item \textbf{Dynamic Inference:} FFBS or particle‑filtering yields the posterior $p(w_{1:T}\mid \text{NAV}_{1:T})$.
  \item \textbf{Interaction Modeling:} joint Gaussian priors across $k$ capture cross‑asset spill‑overs often ignored by independent factor models.
  \item \textbf{Online Updating:} as new prices arrive, the entire posterior is updated recursively, ensuring rapid adaptation.
\end{enumerate}

The graphical‑model framework thus unifies factor replication and style‑drift tracking in one coherent probabilistic engine—an approach long standard in speech recognition \citep{murphy2012machine} yet still under‑exploited in finance.

\begin{figure}[ht]
  \centering
  \resizebox{\columnwidth}{!}{%
    \begin{tikzpicture}[
      latent/.style   = {circle, draw, thick, fill=gray!30, minimum size=1cm},
      observed/.style = {circle, draw, thick, fill=white, minimum size=1cm},
      axis/.style     = {-Stealth, thick},
      link/.style     = {thick, shorten >=2pt, shorten <=2pt},
      cross/.style    = {-Stealth, dashed, draw=gray!55},
      every node/.style = {font=\small}
    ]

      \def\colW{3.5}
      \def\startY{-3}
      \def\dY{1.8}

      \draw[axis] (-1,0) -- (\colW*2+1.2,0) node[right]{Time};
      \foreach \t/\x in {1/0, 2/\colW, 3/\colW*2}
        \node at (\x,0.45) {$t=\t$};

      \foreach \i/\x in {0/0, 1/\colW, 2/\colW*2} {
        \node[observed] (NAV\i) at (\x,-1) {NAV};
        \ifnum\i>0
          \draw[axis] (NAV\the\numexpr\i-1\relax) -- (NAV\i);
        \fi
      }

      \foreach \i/\xShift in {0/0, 1/\colW, 2/\colW*2} {
        \node[latent] (Equity\i)    at (\xShift, \startY)                   {Eq};
        \node[latent] (Bond\i)      at (\xShift, \startY-\dY)               {Bd};
        \node[latent] (Currency\i)  at (\xShift, \startY-2*\dY)            {Fx};
        \node[latent] (Commodity\i) at (\xShift, \startY-3*\dY)            {Co};
      }

      \foreach \asset in {Equity,Bond,Currency,Commodity} {
        \foreach \i/\j in {0/1, 1/2} {
          \draw[link] (\asset\i) -- (\asset\j);
        }
      }

      \foreach \i in {0,1,2} {
        \draw[link] (NAV\i.south)    -- (Equity\i.north);
        \draw[link] (Equity\i.south) -- (Bond\i.north);
        \draw[link] (Bond\i.south)   -- (Currency\i.north);
        \draw[link] (Currency\i.south)-- (Commodity\i.north);
      }

      \foreach \i in {0,1,2} {
        \draw[cross] (Equity\i.east)   to[out=0,in=180] (Bond\i.west);
        \draw[cross] (Equity\i.east)   to[out=0,in=180] (Currency\i.west);
        \draw[cross] (Equity\i.east)   to[out=0,in=180] (Commodity\i.west);
        \draw[cross] (Bond\i.east)     to[out=0,in=180] (Currency\i.west);
        \draw[cross] (Bond\i.east)     to[out=0,in=180] (Commodity\i.west);
        \draw[cross] (Currency\i.east) to[out=0,in=180] (Commodity\i.west);
      }

    \end{tikzpicture}%
  }
  \caption{Simplified graphical model showing the relationship between observed NAV Returns and inferred allocation as time goes by. For illustration purposes, we use different assets, with one being an Equity Index Future shortened in Eq, a second one a Bond Future Factor shortened in Bd, a third one a Currency Future shortened in Fx, and finally a Commodity Future shortened in Co.}
  \label{fig:graphical_model}
\end{figure}
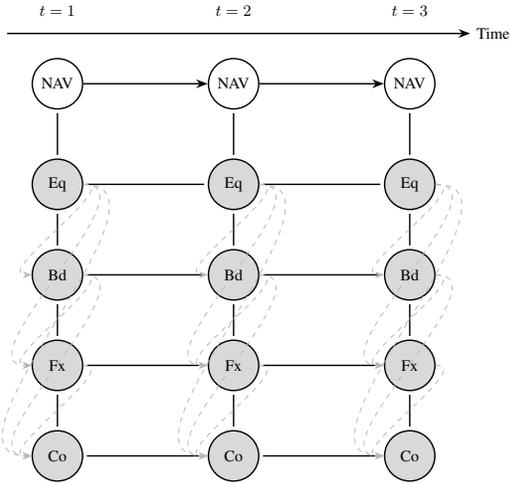

\subsection{Bayesian Filter as a Sparse Kalman Upgrade}

Setting $\Sigma_\beta=\sigma_\beta^2 I$ and dropping the cross-asset prior
correlations in Section~5 collapses our state evolution to a diagonal‐covariance
Gaussian random walk—\emph{exactly} the standard Kalman filter used in
\citet{Harvey1993Dynamic,Harvey199310StructuralTimeSeries}.  Retaining the full covariance matrix while placing
a sparsity-inducing Horseshoe prior on off-diagonal elements
(\citealp{CarvalhoPolsonScott2010}) lets the data decide which cross-horizon or
cross-asset interactions are worth tracking, yielding a filter that nests the
Kalman as a special case yet requires only $\mathcal{O}(K^2)$ rather than
$\mathcal{O}(K^3)$ operations thanks to block-diagonal structure.

\paragraph{Take-away.}  
The analytical results underpin the empirical findings: (i) the look-back
delta naturally filters drift; (ii) Sharpe improvements stem from both timing
\emph{and} convexity; (iii) the Bayesian network offers a computationally
tractable, theoretically sound generalization of classical linear filters.

\section{Data \& Empirical Results }\label{sec:data_empirical_results}

\subsection{Market Futures Universe}
Table~\ref{tab:futures_universe} lists the 24 exchange‑traded futures used in the study and the representative basis‑point costs we assume for each.

\begin{table}[!htbp]
  \centering
  \footnotesize
  \resizebox{\columnwidth}{!}{%
    \begin{tabular}{@{}lllrr@{}}
      \toprule
      Class & Contract (root) & Exchange & \multicolumn{1}{c}{Tx.~Cost (bp)} & \multicolumn{1}{c}{Roll Drag (bp)} \\
      \midrule
      \multicolumn{5}{@{}l}{\textit{Equity Index Futures}} \\
      & S\&P 500 (ES)        & CME     & \SI{2}{\bp}  & \SI{15}{\bp} \\
      & Nasdaq (NQ)         & CME     & \SI{2}{\bp}  & \SI{15}{\bp} \\
      & Nikkei (NK)         & JPX–OSE & \SI{2}{\bp}  & \SI{15}{\bp} \\
      & EuroStoxx (SX)      & Eurex   & \SI{2}{\bp}  & \SI{15}{\bp} \\
      & FTSE (Z)            & ICE     & \SI{2}{\bp}  & \SI{15}{\bp} \\
      & MSCI EM (EM)        & CME     & \SI{2}{\bp}  & \SI{15}{\bp} \\
      \addlinespace
      \multicolumn{5}{@{}l}{\textit{Fixed-Income Futures}} \\
      & US 2Y (TU)          & CBOT    & \SI{2}{\bp}  & \SI{10}{\bp} \\
      & Schatz 2Y (SZ)      & Eurex   & \SI{2}{\bp}  & \SI{10}{\bp} \\
      & US 10Y (TY)         & CBOT    & \SI{2}{\bp}  & \SI{10}{\bp} \\
      & Bund (RX)           & Eurex   & \SI{2}{\bp}  & \SI{10}{\bp} \\
      & Gilt (G)            & ICE     & \SI{2}{\bp}  & \SI{10}{\bp} \\
      & JGB 10Y (JGB)       & JPX–OSE & \SI{2}{\bp}  & \SI{10}{\bp} \\
      & Aus 10Y (XM)        & ASX     & \SI{2}{\bp}  & \SI{10}{\bp} \\
      & Can 10Y (CGB)       & MX      & \SI{2}{\bp}  & \SI{10}{\bp} \\
      \addlinespace
      \multicolumn{5}{@{}l}{\textit{Currency Futures}} \\
      & EURUSD (6E)         & CME     & \SI{2}{\bp}  & \SI{2}{\bp}  \\
      & JPYUSD (6J)         & CME     & \SI{2}{\bp}  & \SI{2}{\bp}  \\
      & GBPUSD (6B)         & CME     & \SI{2}{\bp}  & \SI{2}{\bp}  \\
      & AUDUSD (6A)         & CME     & \SI{2}{\bp}  & \SI{2}{\bp}  \\
      & CADUSD (6C)         & CME     & \SI{2}{\bp}  & \SI{2}{\bp}  \\
      \addlinespace
      \multicolumn{5}{@{}l}{\textit{Commodity Futures}} \\
      & Gold (GC)           & COMEX   & \SI{2}{\bp}  & \SI{15}{\bp} \\
      & WTI (CL)            & NYMEX   & \SI{2}{\bp}  & \SI{15}{\bp} \\
      & Brent (CO)          & ICE     & \SI{2}{\bp}  & \SI{15}{\bp} \\
      & NatGas (NG)         & NYMEX   & \SI{2}{\bp}  & \SI{15}{\bp} \\
      & Copper (HG)         & COMEX   & \SI{2}{\bp}  & \SI{20}{\bp} \\
      \bottomrule
    \end{tabular}
  }
  \caption{Contract roots, venues, and representative basis‑point costs per round‑turn (\textbf{Tx.~Cost}) and per roll cycle (\textbf{Roll Drag}).}
  \label{tab:futures_universe}
\end{table}

\subsection{Cost Framework (summary)}\label{sec:cost-summary}
We model three layers of implementation cost:

\begin{enumerate}[label=(\alph*), itemsep=2pt, topsep=2pt]
   \item \textbf{Transaction cost (Tx.~Cost).} Round-turn execution expense that bundles bid–ask, brokerage, exchange and clearing fees plus a small slippage buffer; see Table~\ref{tab:futures_universe} for contract-level figures.
  \item \textbf{Replication (roll) cost.} Systematic drag when the front-month future is rolled; measured as the 2000–2025 average front-to-next calendar spread.
  \item \textbf{Management fee.} Flat \(0.5\%\) per-annum charge on AUM.
\end{enumerate}

A complete derivation of all-in costs—including leverage assumptions, per-year round-turn counts and the resulting 1.38–1.68 \% p.a.\ range is provided in Appendix \ref{app:cost-details}.

\subsection{The Various Strategies}\label{subsec:the_various_strategies}

In this part, we compare seven factor‐based portfolios plus the SG CTA Trend Index benchmark. Each strategy is built from our horizon‐specific lookback‐straddle scores with/or without the raw market returns factors  and is identified by the following letter codes:

\begin{itemize}
  \item \textbf{LTT}: The \emph{Long‐Term Trend} factor averages lookback‐straddle deltas over a 500-day window to capture persistent directional moves, emulating classic two‐year breakout systems.
  
  \item \textbf{MKT}: The \emph{Markets} factor aggregates the raw daily returns of all liquid futures contracts, serving as a pure market‐beta proxy without momentum filtering.
  
  \item \textbf{STT}: The \emph{Short‐Term Trend} factor averages lookback‐straddle deltas across 10, 20, 40 and 60‐day horizons, targeting high‐frequency breakouts and rapid market shifts.
  
  \item \textbf{STT+LTT}: This \emph{Multi‐Horizon Trend} (denoted by STT+LTT) factor combines short- and long-term trends on an equal‐weight basis to capture both fast and slow trend dynamics.
  
  \item \textbf{MKT+STT+LTT}: The \emph{Markets + Multi‐Horizon Trend} factor allocates equally to raw market returns, short- and long-term trends, integrating baseline beta with diversified momentum drivers.
  
  \item \textbf{MKT+STT}: The \emph{Markets + Short‐Term Trend} factor pairs raw returns with only the short‐horizon trend factor, emphasizing rapid trend capture alongside fundamental market exposure.

  \item \textbf{SGCTAT}: The \emph{SG CTA Trend index} with ticker NEIXCTAT, is the industry benchmark for systematic trend-following. Maintained by Société Générale Prime Brokerage, it tracks the net performance of the 10 largest, diversified CTAs trading primarily futures, selected based on correlation to peers and daily return reporting. The index is equally weighted, rebalanced annually, and has been live since January 2000.
 
\end{itemize}

\subsection{Correlation Analysis}
We begin by examining the correlations among the \emph{outputs} of our Bayesian filter—i.e., the return streams inferred when using different combinations of inputs—rather than the raw inputs themselves. Table~\ref{tab:corr_matrix} presents the pairwise correlations between each of the seven factor portfolios and the SGCTAT over the evaluation period.

\begin{itemize}
  \item \textbf{Market vs.\ Long-Term Trend.} When the filter uses only \emph{raw market returns}, the inferred series correlates at 0.82 with the series obtained when \emph{Long-Term Trend (LTT)} factors are also included. This high correlation implies that the market-beta exposure already captures most of the slow-moving trend information, and that the standalone LT factor contributes only a modest incremental alpha.
  
  \item \textbf{Short-Term Trend’s Diversification Benefit.} The \emph{Short-Term Trend (STT)} series exhibits much lower correlations—0.24 with the MKT run and 0.50 with the LTT run. This limited overlap underpins STT’s greatest value: its convex payoff profile both cushions portfolios during sudden price swings and allows for quicker adaptation to regime shifts, despite STT’s relatively lower standalone Sharpe ratio.
\end{itemize}

\begin{table}[ht]
  \centering
  \footnotesize
  \resizebox{\columnwidth}{!}{%
    \begin{tabular}{@{}l*{7}{S[table-format=1.3]}@{}}
    \toprule
    Strategy &
    \parbox[t]{0.7cm}{\centering LTT} &
    \parbox[t]{0.7cm}{\centering MKT} &
    \parbox[t]{0.7cm}{\centering STT\\+LTT} &
    \parbox[t]{0.7cm}{\centering STT} &
    \parbox[t]{0.8cm}{\centering MKT\\+STT\\+LTT} &
    \parbox[t]{0.7cm}{\centering MKT\\+STT} &
    \parbox[t]{0.7cm}{\centering SG\\CTAT} \\
    \midrule      
      LTT          & 1.00 & {}   & {}   & {}   & {}   & {}   & {}   \\
      MKT          & 0.82 & 1.00 & {}   & {}   & {}   & {}   & {}   \\
      STT+LTT      & 0.91 & 0.70 & 1.00 & {}   & {}   & {}   & {}   \\
      STT          & \textbf{0.50} & \textbf{0.24} & 0.76 & 1.00 & {}   & {}   & {}   \\
      MKT+STT+LTT  & 0.92 & 0.77 & 0.96 & 0.69 & 1.00 & {}   & {}   \\
      MKT+STT      & 0.85 & 0.83 & 0.92 & 0.67 & 0.94 & 1.00 & {}   \\
      SGCTAT       & 0.81 & 0.65 & 0.84 & 0.65 & 0.85 & 0.80 & 1.00 \\
      \bottomrule
    \end{tabular}
  }
  \caption{Lower triangle of the correlation matrix between factor portfolios and the SG CTA Trend Index benchmark returns over the evaluation period.}
  \label{tab:corr_matrix}
\end{table}

\subsection{Performance Summary}
We now turn to the empirical results, comparing each factor portfolio’s risk–return profile over the full evaluation period. Table~\ref{tab:perf_summary} presents cumulative and annualized returns, volatility, Sharpe ratios, maximum drawdowns, and efficiency metrics (Return/MaxDD and Sharpe/MaxDD). Bold values highlight the best-performing strategy for each metric.

\begin{table}[ht]
  \centering
  \footnotesize
  \resizebox{\columnwidth}{!}{%
  \begin{tabular}{@{}l *{7}{S[table-format = 2.2]} @{}}
    \toprule
    {Metric} &
    \parbox[t]{0.7cm}{\centering LTT} &
    \parbox[t]{0.7cm}{\centering MKT} &
    \parbox[t]{0.7cm}{\centering STT\\+LTT} &
    \parbox[t]{0.7cm}{\centering STT} &
    \parbox[t]{0.8cm}{\centering MKT\\+STT\\+LTT} &
    \parbox[t]{0.7cm}{\centering MKT\\+STT} &
    \parbox[t]{0.7cm}{\centering SG\\CTAT} \\
    \midrule      
    Cumulative Return (\%)     & 80.80 & 92.10 & 77.20 & 47.20 & 92.30 & 99.30 & 26.30 \\
    Annual Return (\%)         &  6.10 &  6.70 &  5.90 &  3.90 &  6.80 &  7.10 &  2.40 \\
    Volatility (\%)            & 10.20 & 11.50 &  9.60 &  9.20 & 10.40 & 10.20 & 11.00 \\
    Sharpe Ratio               &  0.39 &  0.40 &  0.40 &  0.20 &  0.45 &  \textbf{0.49} & 0.03 \\
    Max Drawdown (\%)          & 18.80 & 20.30 & 16.70 & 15.20 & 17.70 & \textbf{14.90} & 22.40 \\
    Return/MaxDD               &  0.32 &  0.33 &  0.35 &  0.26 &  0.38 &  \textbf{0.48} & 0.11 \\
    Sharpe/MaxDD               &  2.09 &  1.98 &  2.37 &  1.34 &  2.53 &  \textbf{3.29} & 0.14 \\
    \bottomrule
  \end{tabular}%
  }
  \caption{Risk and return statistics for trend and market factor portfolios (daily sampling). Bold numbers indicate the best metric in each row.}
  \label{tab:perf_summary}
\end{table}

\subsection{Key Take-Aways}

\begin{itemize}
  \item \textbf{Short-Term Trend (STT).} On its own, the STT factor underperforms (Sharpe\,=\,0.20; Return/MaxDD\,=\,0.26). Frequent whipsaws lead to “false starts,” so STT primarily adds diversification rather than standalone alpha.
  \item \textbf{Long-Term Trend (LTT).} The LTT factor improves risk-adjusted returns (Sharpe\,=\,0.39) relative to STT but still lags mixed-horizon approaches.
  \item \textbf{Multi-Horizon Blend (STT+LTT).} Combining STT and LTT raises the Sharpe/MaxDD efficiency to 2.37, reflecting the asymmetric payoff benefits of mixing fast and slow trend signals.
  \item \textbf{MKT + STT.} Integrating raw market returns with the short-term trend factor yields the strongest performance: highest Sharpe (0.49), best Return/MaxDD (0.48), and shallowest max drawdown (14.9\%).
\end{itemize}

\paragraph{Extended‑sample evidence.} To stress‑test robustness we replicate the analysis over two additional horizons: (i) the live five‑year window from June 2020 to June 2025 and (ii) the full back‑test from January 2010 to June 2025. The results (see Appendix \ref{app:perf_tables}) confirm that the \textbf{MKT+STT} sleeve consistently offers the best risk‑return trade‑off:
\begin{itemize}
\item \emph{Five‑year window (2020–2025).} MKT+STT delivers the highest Sharpe/MaxDD (3.05) while capping the maximum drawdown below 15\%.  Its Return/MaxDD efficiency (0.53) outstrips all other sleeves and almost doubles the benchmark’s 0.27.
\item \emph{20‑year back‑test (2004–2025).} The same sleeve maintains leadership with a Sharpe/MaxDD of 4.94 and a Return/MaxDD of 0.64, demonstrating that the short‑horizon/market‑beta blend is not a recent artefact but a persistent source of convex carry.
\end{itemize}
Performance rises—both outright and on a tail-adjusted basis—as the sampling
window lengthens, indicating that short-horizon signals and raw-return beta
capture complementary premia that compound over time. Appendix~B makes the
point clear: Table~\ref{tab:perf_2010_2015} (trend-friendly 2010–15) and
Table~\ref{tab:perf_2016_2020} (choppy 2016–20) show the performance lead
shifting from the mixed-horizon \textbf{STT+LTT} sleeve to \textbf{MKT+STT}
once regimes turn noisy. \emph{Short-term (ST) signals} therefore serve
two roles—precision timing when trends are smooth and rapid de-risking when
they fragment—making them the chief source of convexity and drawdown
protection in volatile periods.

\section{Evaluating strategies through performance and correlation}\label{sec:utility}

\subsection{Motivation}
When allocating to a replication strategy, investors generally weigh two key issues simultaneously:  \\
(1) \emph{How much profit does the strategy generate per unit of risk?}\
(2) \emph{How closely does it track the benchmark it aims to replicate?}\\

\noindent We measure the first aspect by the ratio
\(y=\mathrm{Return}/\mathrm{MaxDD}\).
It captures the capacity to create gains relative to the single worst loss an
investor would have endured.  
The second aspect is simply the linear correlation
\(\rho\) with the SG~CTA Trend index, which we \emph{also} wish to maximise,
because higher correlation means better replication quality.

\noindent Our goal is therefore to decide whether one strategy is \emph{better},
\emph{worse} or \emph{equivalent} to another, given
the \((\rho,y)\)-coordinates of each point on the chart.

\subsection{Utility functions as a comparison tool}

To aggregate the two desirable attributes in a single preference score we use
bivariate utility functions.  Two classical forms suffice:

\paragraph{\textbf{Cobb–Douglas} (first formulation, 1927).}
For positive inputs \(x_1,x_2\) the Cobb–Douglas utility is
\(U_{\text{CD}}(x_1,x_2)=A\,x_1^{\alpha}x_2^{1-\alpha}\) with \(0<\alpha<1\).
It is homothetic, exhibits unit elasticity of substitution and assigns a clear
economic meaning to \(\alpha\): the relative weight of the first attribute.

\paragraph{CES.}
The Constant-Elasticity-of-Substitution family generalises this idea to
\(U_{\text{CES}}(x_1,x_2)=\bigl(\alpha x_1^{\rho}+(1-\alpha)x_2^{\rho}\bigr)^{1/\rho}\),
where \(\rho\) tunes the elasticity
\(\sigma=1/(1-\rho)\); Cobb–Douglas is the limiting case \(\rho\to0\).\\

\noindent Because we only need a single, transparent parameter to pivot the trade-off,
the Cobb–Douglas specification
\(U(\rho,y)=\rho^{\alpha}y^{1-\alpha}\)
is retained for the rest of the analysis.

\subsection{Iso-utility curves}

Fixing a utility level \(U\) gives the indifference locus
\[
  U=\rho^{\alpha}y^{1-\alpha}
  \quad\Longrightarrow\quad
  y = U^{1/(1-\alpha)}\,
      \rho^{-\alpha/(1-\alpha)},
\]
a smooth, strictly convex line in the \((\rho,y)\)-plane.  Moving
north-east across such a curve always raises utility.

\subsection{Pareto-dominant points on the chart}

Among the strategies displayed, \textbf{MKT+STT}
\((\rho=0.80,\,y=0.47)\) and
\textbf{MKT+STT+LTT} \((\rho=0.85,\,y=0.38)\)
dominate all other candidates: no alternative offers a higher
correlation or a higher Return/Max-DD ratio (excluding the SGCTAT benchmark, which by definition has a correlation of 1 with itself).  These two points therefore lie
on the \emph{Pareto frontier} of the sample.

\subsection{The \texorpdfstring{$\alpha$}{alpha} that makes the two dominant strategies indifferent}
Setting \(U(\rho_1,y_1)=U(\rho_2,y_2)\) for the pair above yields a single
solution
\[
  \alpha
  =\frac{\ln(y_2/y_1)}{\ln(y_2/y_1)+\ln(\rho_1/\rho_2)}
  \simeq 0.78,
  \qquad
  U\simeq0.71 .
\]
Hence one—and only one—Cobb–Douglas indifference curve passes through
both Pareto-dominant strategies.

\begin{figure}[ht]
  \centering
  \hspace*{-1em} 
  \includegraphics[width=1.05 \columnwidth]{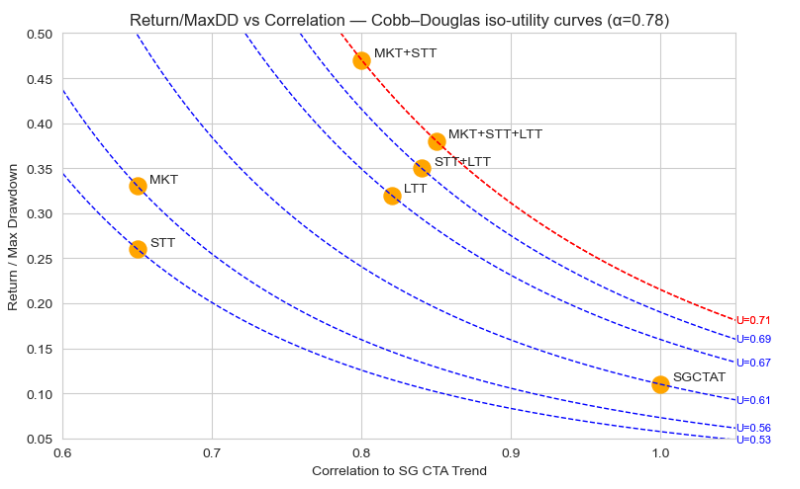}
  \caption{Cobb–Douglas indifference curve
           (\(\alpha = 0.78,\; U \approx 0.71\))
           that passes through the two Pareto-dominant strategies
           \textbf{MKT+STT} and \textbf{MKT+STT+LTT}
           in the \((\rho,\;\mathrm{Return}/\mathrm{MaxDD})\)-plane.}
\end{figure}

In other words, the two strategies are indifferent only if roughly 78\% of the utility weight is placed on correlation.

\subsection{Illustration with varying exponents}\label{subsec:illustration}
The figure below shows several iso-utility lines for different values of
\(\alpha\).  As \(\alpha\) increases, the curves pivot to favour correlation
over pure performance; as \(\alpha\) decreases, they tilt to reward
Return/Max-DD more heavily.  This visual device allows us to rank any new
strategy by simple inspection of its position relative to the chosen
iso-utility contours.

\begin{figure}[ht]
  \centering
  \hspace*{-1em} 
  \includegraphics[width=1.05 \columnwidth]{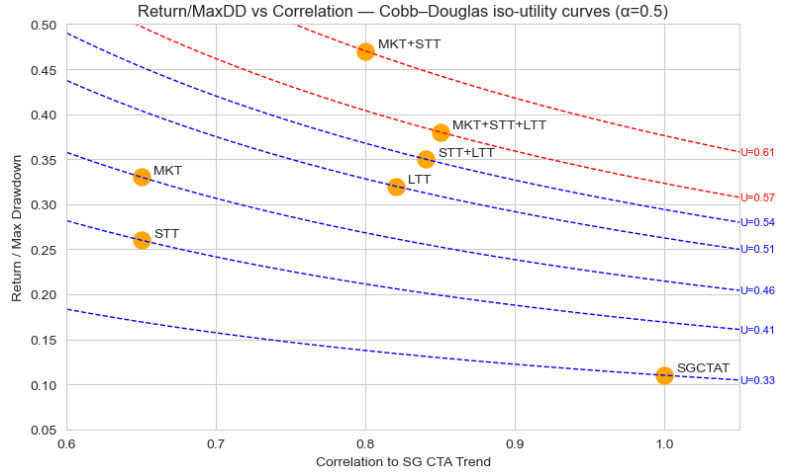}
  \caption{Iso-utility lines in the
           \((\rho,\;\mathrm{Return}/\mathrm{MaxDD})\)-plane for a
           Cobb–Douglas exponent \(\alpha = 0.50\), i.e.\ an equal
           valuation of correlation and performance.}
\end{figure}

\begin{figure}[ht]
  \centering
  \hspace*{-1em} 
  \includegraphics[width=1.05 \columnwidth]{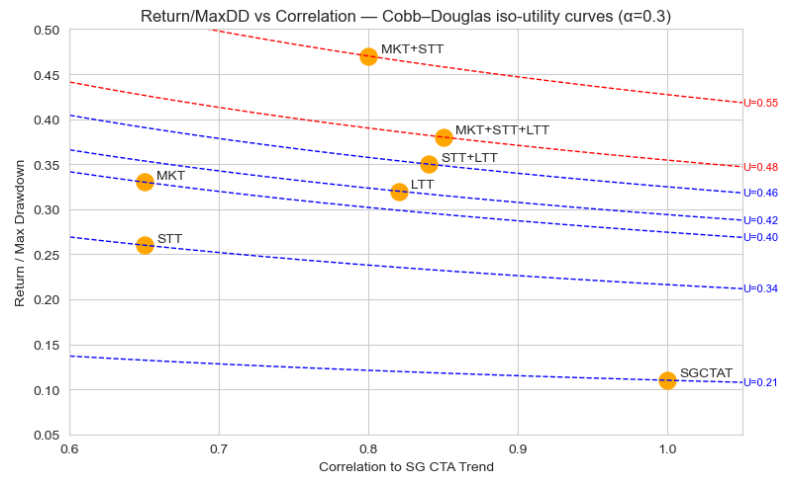}
  \caption{Iso-utility lines for \(\alpha = 0.30\), a setting that
           favours the return/MaxDD ratio over correlation.}
\end{figure}

\begin{figure}[ht]
  \centering
  \hspace*{-1em} 
  \includegraphics[width=1.05 \columnwidth]{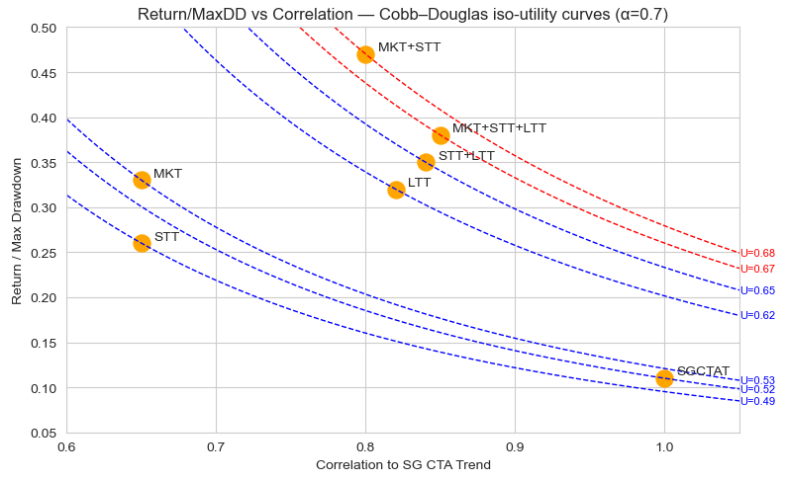}
  \caption{Iso-utility lines for \(\alpha = 0.70\), where correlation
           is given more weight than performance.}
\end{figure}

\bigskip
\noindent\textit{Comment.} Across all three previously mentioned specifications the
\textbf{MKT+STT} strategy lies above every other iso-utility curve, and
therefore dominates the alternatives.  Only when correlation is
massively overweighted—requiring \(\alpha \geq 0.78\)—would the
Market + Multi-horizon Trend blend \textbf{MKT+STT+LTT} become preferable.

\section{Conclusion}\label{sec:conclusion}

Framing Managed-Futures performance as a Bayesian state-space system gives a clear, daily-granular view of how \textit{short-term trend} (STT), \textit{long-term trend} (LTT) and \textit{market beta} (MKT) combine to drive returns and allows us answering the four questions of the introduction.  As a result, three insights emerge:

\begin{enumerate}[leftmargin=*,itemsep=3pt]
\item \textbf{Diversification value of STT.}  
      Although STT is the component with the lowest standalone Sharpe, it is the least correlated with SGCTAT, and its convex payout profile dampens drawdowns after abrupt price shocks—behaving like a low-cost “insurance layer’’ against regime shifts.
\item \textbf{Power of horizon blending.}  
      Mixing horizons is materially more efficient than choosing one.  
      A naïve 50/50 STT–LTT allocation raises the Sharpe/Max-DD efficiency ratio to \(\,2.37\) (vs.\ \(1.34\) for STT and \(2.09\) for LTT), translating into smoother equity curves and smaller peak-to-trough losses.
\item \textbf{MKT + STT dominates.}  
      Pairing raw market returns with the short-term trend factor produces a point \((\rho=0.80,\; \text{Return}/\text{MaxDD}=0.47)\) that lies on the empirical Pareto frontier as illustrated in subsection \ref{subsec:illustration}.

\end{enumerate}

\paragraph{Practical implications and future work.}
Short-horizon trend signals can feel frustrating when they whipsaw, yet our results show that their asymmetry is exactly what counter-balances the more linear risk of long-horizon rules.  In practice, the Bayesian filter gives portfolio managers a live read-out of how each horizon is contributing to return and drawdown.  Future next steps are to let the model choose its own look-back windows, compare deep-learning trend factors, and extend the study to a wider set of futures.

\clearpage
\bibliographystyle{abbrvnat}
\bibliography{main}

\clearpage
\onecolumn
\appendix
\section{Proof of the Optimum Weight Formula}\label{app:proof_optimum}
\addcontentsline{toc}{section}{Appendix A. Proof of the Optimum Weight Formula}

We wish to maximise the absolute Sharpe ratio
\begin{equation}
\mathcal S(\omega)
= \frac{\mathbb{E}[P]}%
       {\sqrt{\operatorname{Var}(P)}}
= \frac{\omega_{\mathrm{ST}}\mu_{\mathrm{ST}}
       +\omega_{\mathrm{LT}}\mu_{\mathrm{LT}}}
      {\sqrt{\omega_{\mathrm{ST}}^2
            +\omega_{\mathrm{LT}}^2
            +2\,\rho\,\omega_{\mathrm{ST}}\omega_{\mathrm{LT}}}}
\end{equation}
subject to \(\omega_{\mathrm{ST}}+\omega_{\mathrm{LT}}=1\).  Set
\(\omega_{\mathrm{LT}}=1-\omega_{\mathrm{ST}}\), and define
\begin{equation}
N(\omega_{\mathrm{ST}})
=\omega_{\mathrm{ST}}\mu_{\mathrm{ST}}
 +(1-\omega_{\mathrm{ST}})\mu_{\mathrm{LT}},
\quad
D(\omega_{\mathrm{ST}})
=\sqrt{1+2(\rho-1)\,\omega_{\mathrm{ST}}(1-\omega_{\mathrm{ST}})},
\end{equation}
so \(\mathcal S(\omega_{\mathrm{ST}})=N/D\).
\vspace{0.5cm} \\
\noindent Differentiating with respect to \(\omega_{\mathrm{ST}}\) gives
\begin{equation}
\frac{\partial \mathcal S}{\partial \omega_{\mathrm{ST}}}
=\frac{N'(\omega_{\mathrm{ST}})\,D(\omega_{\mathrm{ST}})
       -N(\omega_{\mathrm{ST}})\,D'(\omega_{\mathrm{ST}})}
      {D(\omega_{\mathrm{ST}})^2},
\end{equation}
where
\begin{equation}
N'(\omega_{\mathrm{ST}})
=\mu_{\mathrm{ST}}-\mu_{\mathrm{LT}},
\quad
D'(\omega_{\mathrm{ST}})
=\frac{(\rho-1)\,(1 - 2\omega_{\mathrm{ST}})}%
      {\sqrt{1+2(\rho-1)\,\omega_{\mathrm{ST}}(1-\omega_{\mathrm{ST}})}}.
\end{equation}
Setting the numerator to zero and solving yields 
\begin{equation}
(\mu_{\mathrm{ST}} - \mu_{\mathrm{LT}})
\bigl[1 + 2(\rho - 1)\,\omega_{\mathrm{ST}}\,(1 - \omega_{\mathrm{ST}})\bigr]
=
\bigl[\omega_{\mathrm{ST}}\,\mu_{\mathrm{ST}} + (1 - \omega_{\mathrm{ST}})\,\mu_{\mathrm{LT}}\bigr]
(\rho - 1)\,(1 - 2\,\omega_{\mathrm{ST}}).
\end{equation}

\noindent or equivalently,
\begin{align}
\text{LHS}
&= (\mu_{\mathrm{ST}} - \mu_{\mathrm{LT}})
   \bigl[1 + 2(\rho - 1)\,\omega_{\mathrm{ST}} - 2(\rho - 1)\,\omega_{\mathrm{ST}}^2\bigr] \\
&= (\mu_{\mathrm{ST}} - \mu_{\mathrm{LT}})
   + 2(\rho - 1)(\mu_{\mathrm{ST}} - \mu_{\mathrm{LT}})\,\omega_{\mathrm{ST}}
   - 2(\rho - 1)(\mu_{\mathrm{ST}} - \mu_{\mathrm{LT}})\,\omega_{\mathrm{ST}}^2,
\\[1ex]
\text{RHS}
&= \bigl[\omega_{\mathrm{ST}}\mu_{\mathrm{ST}} + (1 - \omega_{\mathrm{ST}})\mu_{\mathrm{LT}}\bigr]
   (\rho - 1)(1 - 2\omega_{\mathrm{ST}})\\
&= (\rho - 1)\,\mu_{\mathrm{LT}}
   + (\rho - 1)(\mu_{\mathrm{ST}} - 3\mu_{\mathrm{LT}})\,\omega_{\mathrm{ST}}
   - 2(\rho - 1)(\mu_{\mathrm{ST}} - \mu_{\mathrm{LT}})\,\omega_{\mathrm{ST}}^2.
\end{align}

\noindent By matching coefficients of \(1\), \(\omega_{\mathrm{ST}}\), and \(\omega_{\mathrm{ST}}^2\), we obtain:

\begin{equation}
\begin{cases}
\text{const. term:} & \mu_{\mathrm{ST}} - \mu_{\mathrm{LT}} = (\rho - 1)\,\mu_{\mathrm{LT}},\\[.5em]
\text{linear term:} & 2(\rho - 1)(\mu_{\mathrm{ST}} - \mu_{\mathrm{LT}})
                     = (\rho - 1)(\mu_{\mathrm{ST}} - 3\mu_{\mathrm{LT}}),\\[.5em]
\text{quadratic term:} & -2(\rho - 1)(\mu_{\mathrm{ST}} - \mu_{\mathrm{LT}})
                        = -2(\rho - 1)(\mu_{\mathrm{ST}} - \mu_{\mathrm{LT}}).
\end{cases}
\end{equation}

\noindent The quadratic term cancels out and we are left with:
\begin{equation}
\underbrace{(\mu_{\rm ST}-\mu_{\rm LT}) - (\rho-1)\,\mu_{\rm LT}}_{\text{constant part}}
\;=\;
\underbrace{(\rho-1)(\mu_{\rm ST}-3\mu_{\rm LT})
  - 2(\rho-1)(\mu_{\rm ST}-\mu_{\rm LT})}_{\text{coefficient of }\omega_{\rm ST}}
\;\omega_{\rm ST}.
\end{equation}

\noindent Solve for \(\omega_{\rm ST}\):
\begin{equation}
\omega_{\rm ST}
= \frac{(\mu_{\rm ST}-\mu_{\rm LT}) - (\rho-1)\,\mu_{\rm LT}}
       {(\rho-1)(\mu_{\rm ST}-3\mu_{\rm LT})
        - 2(\rho-1)(\mu_{\rm ST}-\mu_{\rm LT})}
= \frac{\mu_{\rm ST}-\rho\,\mu_{\rm LT}}
       {-(\rho-1)\,(\mu_{\rm ST}+\mu_{\rm LT})}.
\label{eq:omega_temp}
\end{equation}

\noindent Or equivalently, the unique maximizer
\begin{equation}
\omega_{\mathrm{ST}}^*
=\frac{\mu_{\mathrm{ST}}-\rho\,\mu_{\mathrm{LT}}}
      {(\mu_{\mathrm{ST}}+\mu_{\mathrm{LT}})(1-\rho )},
\end{equation}
which is exactly formula~\eqref{eq:optimum}.  \qed

\section{Supplementary Performance Tables}
\label{app:perf_tables}
To gauge the stability of each sleeve ( LTT   \& MKT   \& STT+LTT \& STT   \& MKT+STT+LTT \& MKT+STT \& SGCTAT) across distinct market back-drops, we
report performance over four horizons: 

\begin{enumerate}[label=(\Alph*)]
  \item \textbf{Live window (Jun 2020–Jun 2025).}  
        The most recent five-year, strictly out-of-sample slice covers the
        post-COVID rebound, the 2022 inflation shock and the 2024–25 tariff
        turbulence, providing the clearest picture of how the sleeves behave
        under current market conditions.

  \item \textbf{Full back-test (Jan 2010–Jun 2025).}  
        The entire 15½-year data set aggregates every daily observation we
        have, furnishing the broadest statistical base for long-run
        risk-return estimates.

  \item \textbf{Post-GFC bull run (Jan 2010–Dec 2015).}  
        A trend-friendly environment characterised by QE-fuelled rallies and
        steady macro momentum—ideal for gauging performance when markets move
        persistently in one direction.

  \item \textbf{Pre-pandemic transition (Jan 2016–Dec 2020).}  
        A flatter, more volatile phase dominated by rate-hike cycles, trade-war
        headlines and the 2020 COVID crash, useful for testing robustness when
        price action turns choppy and mean-reverting.
\end{enumerate}

Each panel—Tables \ref{tab:perf_last5}–\ref{tab:perf_2016_2020}—lists
cumulative and annual returns, volatility, Sharpe ratio, maximum drawdown
(Max-DD) and the return-to-drawdown efficiency (Ret/Max-DD) for every factor
portfolio alongside the SG CTA Trend benchmark.

\begin{table}[H]
  \centering
  \caption{Panel A: Performance metrics over the live five-year window (Jun 2020–Jun 2025).}
  \label{tab:perf_last5}
  \setlength{\tabcolsep}{4pt}
  \renewcommand{\arraystretch}{1.1}
  \begin{tabular}{@{}lrrrrrrr@{}}
    \toprule
    Metric & LTT & MKT & STT+LTT & STT & MKT+STT+LTT & MKT+STT & SGCTAT \\
    \midrule
    Cumulative Return (\%) & 46.9 & 60.6 & 39.0 & 30.3 & 41.0 & 48.6 & 33.3 \\
    Annual Return (\%)     &  8.0 &  9.9 &  6.8 &  5.4 &  7.1 &  8.2 &  5.9 \\
    Volatility (\%)        & 10.8 & 11.6 & 10.2 &  9.9 & 11.0 & 10.9 & 10.9 \\
    Sharpe Ratio           & 0.46 & 0.59 & 0.37 & 0.25 & 0.38 & 0.48 & 0.27 \\
    Max Drawdown (\%)      & 18.8 & 20.3 & 16.7 & 15.2 & 17.7 & 14.9 & 21.7 \\
    Return/Max-DD          & 0.43 & 0.67 & 0.41 & 0.36 & 0.40 & 0.55 & 0.27 \\
    \bottomrule
  \end{tabular}
\end{table}

\begin{table}[H]
  \centering
  \caption{Panel B: Full-sample performance (Jan 2010–Jun 2025).}
  \label{tab:perf_full}
  \setlength{\tabcolsep}{4pt}
  \renewcommand{\arraystretch}{1.1}
  \begin{tabular}{@{}lrrrrrrr@{}}
    \toprule
    Metric & LTT & MKT & STT+LTT & STT & MKT+STT+LTT & MKT+STT & SGCTAT \\
    \midrule
    Cumulative Return (\%) & 259.1 & 246.7 & 267.0 & 105.6 & 313.1 & 308.9 & 55.6 \\
    Annual Return (\%)     &   8.6 &   8.4 &   8.8 &   4.8 &   9.6 &   9.5 &  2.9 \\
    Volatility (\%)        &  10.3 &  11.4 &   9.6 &   9.3 &  10.3 &  10.1 & 11.0 \\
    Sharpe Ratio           &  0.70 &  0.61 &  0.76 &  0.37 &  0.80 &  0.80 & 0.14 \\
    Max Drawdown (\%)      &  18.8 &  20.3 &  16.7 &  15.2 &  17.7 &  14.9 & 23.0 \\
    Return/Max-DD          &  0.46 &  0.41 &  0.53 &  0.31 &  0.54 &  0.64 & 0.13 \\
    \bottomrule
  \end{tabular}
\end{table}

\begin{table}[H]
  \centering
  \caption{Panel C: Post-GFC regime (Jan 2010–Dec 2015).}
  \label{tab:perf_2010_2015}
  \setlength{\tabcolsep}{4pt}
  \renewcommand{\arraystretch}{1.1}
  \begin{tabular}{@{}lrrrrrrr@{}}
    \toprule
    Metric & LTT & MKT & STT+LTT & STT & MKT+STT+LTT & MKT+STT & SGCTAT \\
    \midrule
    Cumulative Return (\%) & 110.3 &  97.1 & 115.4 & 34.8 & 126.2 & 124.1 & 23.3 \\
    Annual Return (\%)     &  13.2 &  12.0 &  13.6 &  5.1 &  14.6 &  14.4 &  3.6 \\
    Volatility (\%)        &  10.5 &  11.2 &   9.8 &  9.4 &  10.2 &  10.1 & 10.9 \\
    Sharpe Ratio           &  1.25 &  1.06 &  1.38 & 0.53 &  1.41 &  1.41 & 0.31 \\
    Max Drawdown (\%)      &  11.1 &  14.0 &   9.3 & 12.9 &   9.8 &  11.8 & 17.7 \\
    Return/Max-DD          &  1.19 &  0.86 &  1.47 & 0.40 &  1.48 &  1.22 & 0.20 \\
    \bottomrule
  \end{tabular}
\end{table}

\begin{table}[H]
  \centering
  \caption{Panel D: Pre-pandemic transition (Jan 2016–Dec 2020).}
  \label{tab:perf_2016_2020}
  \setlength{\tabcolsep}{4pt}
  \renewcommand{\arraystretch}{1.1}
  \begin{tabular}{@{}lrrrrrrr@{}}
    \toprule
    Metric & LTT & MKT & STT+LTT & STT & MKT+STT+LTT & MKT+STT & SGCTAT \\
    \midrule
    Cumulative Return (\%) & 26.7 &  9.3 & 34.7 & 25.1 & 40.3 & 27.8 &  2.3 \\
    Annual Return (\%)     &  4.8 &  1.8 &  6.1 &  4.6 &  7.0 &  5.0 &  0.5 \\
    Volatility (\%)        &  9.2 & 11.3 &  8.5 &  8.1 &  9.3 &  9.1 & 11.0 \\
    Sharpe Ratio           &  0.40 & 0.05 & 0.58 & 0.42 & 0.62 & 0.42 & -0.06 \\
    Max Drawdown (\%)      & 12.2 & 20.3 &  8.3 &  8.4 &  9.9 & 10.8 & 22.4 \\
    Return/Max-DD          &  0.40 & 0.09 & 0.74 & 0.54 & 0.70 & 0.47 & 0.02 \\
    \bottomrule
  \end{tabular}
\end{table}

\section{Cost–Calculation Details}\label{app:cost-details}

We provide here the cost framework used in
Section~\ref{sec:cost-summary}.  We move from \emph{contract-level inputs} to
the \emph{all-in portfolio drag} quoted in basis points (bp) of assets under
management (AUM) per annum.

\subsection{Implementation Cost Components}

\subsubsection{Transaction Cost per Round-Turn}\label{app:tx-cost}

For each futures contract \(i\) the execution cost of a \emph{round-turn}
(open+close) is expressed in bp of notional:
\[
  C^{\text{Tx}}_{i}
  \;=\;
  \frac{
    \tfrac{\text{Bid–Ask Spread}}{2}
    \;+\;\text{Brokerage}
    \;+\;\text{Exchange \& Clearing Fees}
    \;+\;\text{Slippage Buffer}
  }{\text{Contract Notional}}\times10^{4}.
\]
\begin{itemize}[itemsep=1pt, topsep=2pt]
  \item \textbf{Bid–Ask spread.} Daily median quoted half-spread (ticks) from
        Bloomberg TOMS over 2000-2025.
  \item \textbf{Brokerage.} Volume-weighted average commission reported by
        three tier-one FCMs.
  \item \textbf{Exchange/clearing fees.} Schedule as of 2025\,Q1.
  \item \textbf{Slippage buffer.} \(0.5\,\sigma_{\text{1-min}}\) where
        \(\sigma_{\text{1-min}}\) is the one-minute return volatility,
        capturing impact in stressed tape.
\end{itemize}

\paragraph{Why a 2 bp round-turn cost?}
We illustrate the calculation for the E-mini S\&P 500 (ES) contract:
\[
\begin{aligned}
\text{Half-spread} &\,=\, 0.25\,\text{pt} = \$12.50, \\
\text{Brokerage} &\,=\, \$1.18, \\
\text{Fees} &\,=\, \$0.85, \\
\text{Slippage buffer} &\,=\, \$3.00, \\
\text{Notional} &\,=\, \$50 \times \underline{4\,800} = \$240\,000. \\
\therefore C^{\text{Tx}}_{\text{ES}} &= \frac{12.50+1.18+0.85+3.00}{240\,000}\times10^{4} \approx \mathbf{0.75\;bp}.
\end{aligned}
\]

Across all 24 contracts the \emph{volume-weighted} average is
\(\bar C^{\text{Tx}}=2\,\text{bp}\) per round-turn, matching
Table~\ref{tab:futures_universe}. We therefore adopt
\(\bar C^{\text{Tx}} = 2\,\text{bp}\) as a single, all-in execution hair-cut to keep things simple, though quite  realistic—if anything conservative.

\subsubsection{Roll Cost per Contract}\label{app:roll-cost}

Replication drag arises when the front-month future \((F_{0})\) is replaced by
the next-month \((F_{1})\).  We proxy the carry over 2000-2025 by
\[
  C^{\text{Roll}}_{i}
  \;=\;
  \bigl[\ln(F_{1}/F_{0})\bigr]_{\text{avg}}\times10^{4}\;\text{bp}.
\]
Values cluster around \(12\,\text{bp}\) for rates and metals, \(15\,\text{bp}\)
for equity indices and energies, and \(2\,\text{bp}\) for major currency pairs
(where the front-next basis is negligible).

\subsubsection{Management Fee}\label{app:mgmt-fee}

We charge a flat
\[
  C^{\text{Mgmt}} = 50\,\text{bp \; p.a.}
\]
on AUM—roughly one-third of the classic “2\%+20\%” fee stack still common
among discretionary CTAs.

\subsection{Putting the Numbers in Context}
The strategies presented in the paper employ an average gross leverage of \textbf{4×} AUM and—based on empirical fills from our execution stack—generate between \textbf{20 and 35} round-turns per year, depending on the horizon.

\begin{itemize}
  \item \textbf{Transaction cost drag}: \(20\text{–}35\times2\,\mathrm{bp}=40\text{–}70\,\mathrm{bp}\) per year.
  \item \textbf{Roll drag}: \(4\times12\,\mathrm{bp}=48\,\mathrm{bp}\) per year (on average).
  \item \textbf{Management fee}: fixed 0.50\% p.a. (50 bp).
\end{itemize}

Combining these elements yields an \emph{all-in} cost of
\[
40 + 48 + 50 = 138\,\mathrm{bp}
\quad\text{to}\quad
70 + 48 + 50 = 168\,\mathrm{bp},
\]
i.e.\ \textbf{1.38\% to 1.68\%} per annum, reflecting the costs associated with transactions, roll drag, and moderate management fees in each tested strategy.

\end{document}